\documentclass[11pt]{article}
\usepackage[margin=25mm]{geometry}
\usepackage{amsmath,amssymb,amsthm}
\usepackage{booktabs}
\usepackage{graphicx}
\usepackage{microtype}
\usepackage[colorlinks=true,linkcolor=blue,citecolor=blue,urlcolor=blue]{hyperref}
\usepackage{enumitem}

\newtheorem{lemma}{Lemma}
\newtheorem{corollary}{Corollary}
\newcommand{\R}{\mathbb{R}}
\newcommand{\E}{\mathbb{E}}
\newcommand{\norm}[1]{\left\lVert #1 \right\rVert}
\newcommand{\Fhat}{\widehat{F}}
\newcommand{\hhat}{\widehat{h}}

\title{Cross-Layer Error Compensation and Finite-Sample Feature-Statistics
Matching for Extreme Low-Bit Quantization of Large Language Models}
\author{Ryo Noda\\ \texttt{ryounaman19@gmail.com}}
\date{July 2026}

\begin{document}
\maketitle

\begin{abstract}
Layer-wise post-training quantization (PTQ) of large language models (LLMs)
minimizes each layer's local reconstruction error in isolation, which allows
quantization errors to accumulate across depth and causes severe degradation in
extreme low-bit regimes.
We formulate quantization as a joint optimization over the discrete codes and
scales of \emph{all} layers, driven by two mechanisms:
(i) \textbf{cross-layer error compensation}, which maintains the network-level
accumulated error through the recursion $e_{l+1}=A_l e_l + q_l$, where $A_l$ is
a propagation operator derived from the layer's input differential and $q_l$
is the local quantization residual evaluated at teacher features; and
(ii) \textbf{finite-sample feature-statistics matching}, which aligns means,
projected covariances, and centered empirical kernels between the full-precision
and quantized networks under relative normalization.
We prove that when the propagation operator is instantiated as a finite
difference of the quantized network, the recursion is \emph{exact} for arbitrary
nonlinear layers, which connects the abstract operator formulation to an
efficient forward-difference implementation.
For binary weights we optimize latent variables via a mirror-descent
parameterization $u=\tanh(\beta z)$ with an annealed inverse temperature and a
group-wise log-scale.
On Qwen2.5-1.5B with 1.125-bit group-binary weights (group size 128,
transformer blocks only), error compensation alone reaches a perplexity ratio
of $9.56\pm0.15$ over the FP16 teacher, outperforming logit distillation
($14.09\pm0.53$; $32\%$ relative, $>8\sigma$ over 3 seeds) and layer-local
reconstruction (ratio $\sim 1.4\times10^{3}$) by two orders of magnitude.
The same objective transfers unchanged to 4-bit integer quantization
($1.060\pm0.002$ vs.\ $1.088\pm0.003$ for layer-local), indicating that the
mechanism is independent of the discrete value set.
Out-of-domain evaluations (C4, CNN/DailyMail) show that the quality advantage
of error compensation \emph{grows} off-domain, while statistics matching keeps
the feature-statistics discrepancy low off-domain ($0.42$--$0.88$ vs.\
$1.41$--$2.99$ without it) at a modest quality cost, revealing a complementary
division of labor between the two mechanisms.
We release the equation-level unit tests and the exact training
configurations used in all experiments.
\end{abstract}

\noindent\textbf{Note.} The methods described in this paper are the subject of
a pending patent application. This manuscript is intended for public
disclosure after the filing date.

\section{Introduction}
Extreme low-bit quantization---in particular binary ($1$-bit) and sub-2-bit
representations---is the most aggressive form of LLM weight compression:
a 1.125-bit group-binary representation retains only $7.0\%$ of the FP16
weight footprint.
The dominant PTQ paradigm, exemplified by GPTQ~\cite{gptq} and its
successors, quantizes one layer (or one block~\cite{brecq}) at a time by
minimizing a local output-reconstruction error.
This layer-local view has a well-known blind spot: the error committed by layer
$l$ perturbs the input of layer $l+1$, and these perturbations compound with
depth.
Recent work has begun to address error accumulation---QEP~\cite{qep}
propagates quantization errors into the calibration of subsequent layers,
YAQA~\cite{yaqa} sketches network-level Hessians to guide per-layer rounding,
and TurboBoA~\cite{turboboa} and ResComp~\cite{rescomp} refine the
compensation rules of the GPTQ family.
All of these methods, however, are \emph{sequential-commit} procedures: each
layer's codes are finalized once, in order, and information available only
after later layers are quantized never flows back to earlier decisions.

This paper develops and empirically validates a different design point:
a \emph{joint} objective over the discrete variables of all layers, organized
around an explicit state variable---the accumulated error---that is updated by
a linear recursion and can be driven toward cancellation by the optimizer.
Concretely, our contributions are:
\begin{enumerate}[leftmargin=2em]
\item \textbf{A recursive accumulated-error objective} (Section~\ref{sec:err}).
We define a propagation operator $A_l$ from the layer's input differential,
a local quantization residual $q_l$ evaluated at teacher features, and the
recursion $e_{l+1}=A_l e_l+q_l$. We prove an exactness lemma: with the
finite-difference instantiation of $A_l$ on the quantized network, the
recursion equals the true feature deviation for \emph{arbitrary} nonlinear
layers, not merely to first order (Lemma~\ref{lem:exact}).
This licenses an efficient implementation that never forms a Jacobian.
\item \textbf{Finite-sample statistics matching under relative normalization}
(Section~\ref{sec:stats}). We match means, random-projected covariances, and
centered empirical kernels between teacher and quantized features, normalized
by the teacher's own statistics so that layers of different scales contribute
comparably, and we monitor the same quantity ($E_{\mathrm{GE}}$) as an
equivalence metric at evaluation time.
\item \textbf{A mirror-descent latent parameterization for binary codes}
(Section~\ref{sec:mirror}). Binary codes are produced by $u=\tanh(\beta z)$
with an annealed $\beta$ and a hard-forward finishing phase; we show this is
exactly mirror descent under a binary-entropy mirror map and give the
initialization that reproduces the teacher's signs and magnitudes.
\item \textbf{Controlled ablations with pre-registered seeds}
(Section~\ref{sec:exp}). On a 1.5B-parameter model we isolate each
mechanism at 1.125 bits and at 4.125 bits, evaluate in-domain and
out-of-domain, and report \emph{all} results, including the negative finding
that statistics matching is not a quality driver.
\end{enumerate}

\section{Related Work}
\paragraph{Layer-wise PTQ and error compensation.}
GPTQ~\cite{gptq} minimizes per-layer reconstruction error with second-order
column updates; BRECQ~\cite{brecq} extends the unit to blocks;
AdaRound~\cite{adaround} learns up/down rounding within a layer.
QEP~\cite{qep} explicitly propagates quantization errors and compensates for
accumulated errors during sequential layer-wise PTQ, and reports the largest
gains in the extremely low-bit regime; TurboBoA~\cite{turboboa} adds a
correction for errors propagated from preceding layers without
backpropagation; ResComp~\cite{rescomp} identifies a compensation-aware error
term inside the GPTQ update; YAQA~\cite{yaqa} uses Kronecker-factored sketches
of layer Hessians taken with respect to the \emph{network} output.
Our method differs structurally from this entire family: the codes of all
layers remain free variables of a single objective, the accumulated error is
an explicit state of a recursion rather than an adjustment to calibration
inputs, and later-layer information flows back into earlier-layer updates.
\paragraph{Binary and near-binary LLMs.}
BinaryConnect~\cite{binaryconnect} and XNOR-Net~\cite{xnor} established latent
full-precision weights with sign forward passes and scale factors
($s=\mathrm{mean}|W|$).
BitNet~\cite{bitnet} and BitNet b1.58~\cite{bitnet158} train 1-bit/ternary
LLMs from scratch; OneBit~\cite{onebit} converts pretrained models with a
sign--value decomposition and distillation; BiLLM~\cite{billm} performs
salience-aware binary PTQ; FBI-LLM~\cite{fbillm} trains fully binarized LLMs
from scratch via autoregressive distillation while keeping the embedding and
the causal head in full precision.
We use the standard latent-weight machinery but embed it in the joint
cross-layer objective above; our experiments quantize all transformer-block
linear layers and keep embedding/LM head in FP16, matching the common setting
of the conversion-based literature.
\paragraph{Feature and relation distillation for quantization.}
FAQD~\cite{faqd} matches feature affinities on label-free data;
ZeroQ~\cite{zeroq} matches batch-norm statistics to synthesize calibration
data; classical KD~\cite{kd} matches logits.
Our statistics term differs in its finite-sample, relative-normalized
construction and, more importantly, in the role our experiments assign to it:
it is an \emph{equivalence-preserving} regularizer rather than a quality
driver.
\paragraph{Discrete optimization for quantization.}
ProxQuant~\cite{proxquant} uses proximal operators toward quantized sets;
Ajanthan et al.~\cite{md} formulate quantization as mirror descent and derive
$\tanh$-type maps; QuIP\#~\cite{quipsharp} and SpinQuant~\cite{spinquant}
precondition weights by Hadamard/learned rotations.
Our contribution here is not the mirror map itself but its integration with a
group-wise log-scale, an annealing schedule with sign-fixing criteria, and the
joint cross-layer objective.

\section{Method}
\subsection{Group-wise discrete parameterization}
Let $W\in\R^{m\times n}$ be a weight matrix. We partition each row into
groups of $G$ consecutive entries and represent
\begin{equation}
\widehat{W} \;=\; s \odot q, \qquad q\in\mathcal{D}^{m\times n},
\label{eq:param}
\end{equation}
where $\mathcal{D}$ is a finite value set and $s>0$ is one scale per group,
broadcast within the group. For binary quantization
$\mathcal{D}=\{-1,+1\}$ and the storage cost is $1+16/G$ bits per weight
($1.125$ bits at $G=128$ with FP16 scales); for INT4,
$\mathcal{D}=\{-8,\dots,7\}$ and the cost is $4.125$ bits.
All derivations below are agnostic to $\mathcal{D}$.

\subsection{Cross-layer error dynamics}
\label{sec:err}
Write the full-precision (teacher) network as a composition of layer maps
$h_{l+1}=F_l(h_l)$ and the quantized network as
$\hhat_{l+1}=\Fhat_l(\hhat_l)$, with shared input $h_0=\hhat_0=x$.
Define the deviation $e_l:=\hhat_l-h_l$.
Two quantities localize how error is created and transported:
\begin{align}
q_l &\;:=\; \Fhat_l(h_l)-F_l(h_l)
&&\text{(local quantization residual, at teacher features)},
\label{eq:qlocal}\\
A_l &\;:=\; \frac{\partial F_l}{\partial h}\Big|_{h_l}
&&\text{(propagation operator, input differential)}.
\label{eq:prop}
\end{align}
Expanding $\hhat_{l+1}-h_{l+1}=\Fhat_l(h_l+e_l)-F_l(h_l)$ to first order in
$e_l$ gives the recursion
\begin{equation}
e_{l+1}\;=\;A_l\,e_l\;+\;q_l,\qquad e_0=0.
\label{eq:recursion}
\end{equation}
Equation~\eqref{eq:recursion} exposes the design principle: the optimizer can
choose later-layer residuals $q_l$ that \emph{cancel} the propagated term
$A_l e_l$, which is impossible if each $q_l$ is minimized in isolation.
The operator $A_l$ never needs to be materialized: Jacobian--vector products,
low-rank sketches, or the finite differences below all suffice.

In practice we do not run the recursion with the teacher Jacobian.
The following lemma shows that a specific instantiation makes
\eqref{eq:recursion} exact and reduces it to a forward difference.

\begin{lemma}[Exactness of the quantized finite-difference propagation]
\label{lem:exact}
Instantiate the propagation operator as the finite-difference map of the
\emph{quantized} layer, $\;\widehat{A}_l(e):=\Fhat_l(h_l+e)-\Fhat_l(h_l)$.
Then the recursion $e_{l+1}=\widehat{A}_l(e_l)+q_l$ with $e_0=0$ satisfies
$e_l=\hhat_l-h_l$ exactly for every $l$ and for arbitrary (nonlinear) layer
maps.
\end{lemma}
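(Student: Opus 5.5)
The plan is a direct induction on $l$ that exploits a telescoping identity. Since the recursion is driven by the sequence it produces, I will introduce notation to keep the two objects apart. Let $\tilde e_l$ denote the iterates generated by $\tilde e_{l+1}=\widehat{A}_l(\tilde e_l)+q_l$ with $\tilde e_0=0$. The claim to prove is then $\tilde e_l=\hhat_l-h_l$ for all $l$. No approximation or regularity of $F_l$ or $\Fhat_l$ enters anywhere; they are treated as arbitrary maps.

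\textbf{Base case.} Since $h_0=\hhat_0=x$, we have $\hhat_0-h_0=0=\tilde e_0$.

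\textbf{Inductive step.} Assume $\tilde e_l=\hhat_l-h_l$, which is equivalent to $h_l+\tilde e_l=\hhat_l$. I will substitute the definition of $\widehat{A}_l$ and the definition \eqref{eq:qlocal} of $q_l$ into the recursion. This gives
\begin{equation*}
\tilde e_{l+1}=\bigl[\Fhat_l(h_l+\tilde e_l)-\Fhat_l(h_l)\bigr]+\bigl[\Fhat_l(h_l)-F_l(h_l)\bigr].
\end{equation*}
The middle terms $\mp\Fhat_l(h_l)$ cancel exactly, leaving $\Fhat_l(h_l+\tilde e_l)-F_l(h_l)$. By the inductive hypothesis the first term is $\Fhat_l(\hhat_l)=\hhat_{l+1}$, and by the teacher recursion the second is $F_l(h_l)=h_{l+1}$. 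Hence $\tilde e_{l+1}=\hhat_{l+1}-h_{l+1}$, which closes the induction.

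The only point needing care is keeping $\tilde e_l$, the recursion state, distinct from $e_l$, the defined deviation, so that the argument is not circular. Once that distinction is made, the computation is a single telescoping cancellation. It is worth remarking afterward why exactness holds here but fails for the teacher Jacobian $A_l$ in \eqref{eq:prop}. The finite difference is evaluated on the quantized map $\Fhat_l$ and is anchored at the same point $h_l$ at which $q_l$ is evaluated. As a result, the error $\Fhat_l(\hhat_l)-\Fhat_l(h_l)$ is captured in full rather than through its first-order expansion, and nothing is linearized.
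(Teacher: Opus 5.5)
Your proof is correct and matches the paper's argument: induction on $l$ with base case $e_0=0=\hhat_0-h_0$, and an inductive step in which the $\Fhat_l(h_l)$ terms cancel, leaving $\Fhat_l(\hhat_l)-F_l(h_l)=\hhat_{l+1}-h_{l+1}$. Separating the recursion state $\tilde e_l$ from the defined deviation is a harmless clarification that the paper leaves implicit.
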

\begin{proof}
Induction on $l$. The base case is $e_0=0=\hhat_0-h_0$.
Assume $e_l=\hhat_l-h_l$, i.e., $h_l+e_l=\hhat_l$. Then
\[
e_{l+1}=\widehat{A}_l(e_l)+q_l
=\big[\Fhat_l(h_l+e_l)-\Fhat_l(h_l)\big]+\big[\Fhat_l(h_l)-F_l(h_l)\big]
=\Fhat_l(\hhat_l)-F_l(h_l)=\hhat_{l+1}-h_{l+1}. \qedhere
\]
\end{proof}

\begin{corollary}
Minimizing $\norm{e_L}^2$ with the recursion of Lemma~\ref{lem:exact} is
identical to minimizing the squared deviation of final-layer features computed
by two forward passes. If instead the teacher-side finite difference
$A_l(e)=F_l(h_l+e)-F_l(h_l)$ is used, \eqref{eq:recursion} holds to first
order in $e_l$, consistent with the Jacobian definition \eqref{eq:prop}.
\end{corollary}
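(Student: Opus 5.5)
The plan is to derive both halves of the Corollary directly from Lemma~\ref{lem:exact}, plus a short Taylor argument for the teacher-side variant.

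\emph{First claim.} Lemma~\ref{lem:exact} gives $e_L=\hhat_L-h_L$ identically in the quantized parameters (codes and scales), for every input $x$. The objective $\norm{e_L}^2$ built from the recursion is therefore the same function of the parameters as $\norm{\hhat_L-h_L}^2$. The latter is computed by two forward passes: one through $F_0,\dots,F_{L-1}$ and one through $\Fhat_0,\dots,\Fhat_{L-1}$. Two functions that agree pointwise on the whole parameter domain have the same minimizers. When the parameters are relaxed through the latent parameterization, they also have the same gradients wherever either is differentiable, because the intermediate terms $\pm\Fhat_l(h_l)$ cancel algebraically. I would point out this telescoping explicitly. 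It is the reason the finite-difference recursion never needs a Jacobian.

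\emph{Second claim.} Set $A_l(e)=F_l(h_l+e)-F_l(h_l)$ and let $\tilde e_l$ denote the iterates of $\tilde e_{l+1}=A_l(\tilde e_l)+q_l$. I compare one step of this recursion with the true deviation. Assume each $F_l$ is $C^1$ near $h_l$ (or $C^2$ if an explicit $O(\norm{e}^2)$ remainder is wanted). Taylor expansion gives
\[
A_l(e)=\tfrac{\partial F_l}{\partial h}\big|_{h_l}e+o(\norm{e}),
\]
so the teacher-side difference agrees with the Jacobian operator \eqref{eq:prop} to first order. For the true deviation I would write
\[
\hhat_{l+1}-h_{l+1}=\big[\Fhat_l(h_l+e_l)-\Fhat_l(h_l)\big]+q_l .
\]
The bracket differs from $A_l(e_l)$ by
\[
\big[\Fhat_l(h_l+e_l)-\Fhat_l(h_l)\big]-\big[F_l(h_l+e_l)-F_l(h_l)\big]=\big(D\Fhat_l-DF_l\big)e_l+o(\norm{e_l}).
\]

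\emph{Main obstacle.} This mismatch term is first order in $e_l$ multiplied by the Jacobian gap $D\Fhat_l-DF_l$. That gap is itself of the size of the quantization perturbation, so the claim ``holds to first order'' has to be read as first order jointly in $(e_l,\Fhat_l-F_l)$: it is a cross term, second order in small quantities. I would state this interpretation explicitly. Then I would conclude that \eqref{eq:recursion} with $A_l$ reproduces $e_{l+1}$ up to $o(\norm{e_l})+O(\norm{D\Fhat_l-DF_l}\,\norm{e_l})$. The last step is an induction, using Lipschitz bounds on the $F_l$, to show that these per-layer errors propagate to $\tilde e_L-e_L$ at the same order.
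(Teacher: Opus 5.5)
Your proposal is correct and follows the paper's argument. The first claim is read off directly from Lemma~\ref{lem:exact}, and for the second the paper (Appendix~\ref{app:firstorder}) uses your decomposition $e_{l+1}=\widehat{A}_l(e_l)+q_l$, treating the mismatch $(\widehat{A}_l-A_l)(e_l)$ as a cross term that is second order jointly in the weight perturbation and $e_l$. Your closing Lipschitz induction on $\tilde e_L-e_L$ goes beyond the paper's one-step statement, and it needs $\Fhat_l$, not only $F_l$, to be $C^1$ near $h_l$ so that $D\Fhat_l$ is defined.
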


The corollary matters for implementation: the exact accumulated error of a
$1.5$B-parameter model can be obtained with one extra forward pass, and the
resulting loss
\begin{equation}
L_E \;=\; \E_{x}\!\left[\frac{\norm{e_L(x)}_2^2}{\norm{h_L(x)}_2^2+\epsilon}\right]
\label{eq:le}
\end{equation}
(relative normalization; identity readout) trains all layers' codes and scales
jointly through automatic differentiation. We verified
Lemma~\ref{lem:exact} numerically on random linear networks to
$10^{-10}$ absolute tolerance (Section~\ref{sec:verify}).

\subsection{Finite-sample feature-statistics matching}
\label{sec:stats}
Let $H,\widehat H\in\R^{n\times d}$ collect $n$ token features of a chosen
layer from the teacher and the quantized network on the same inputs, with
means $\mu,\widehat\mu$. With a fixed random projection
$R\in\R^{d\times k}$ ($k{=}64$, entries $\mathcal{N}(0,1/k)$), define centered
projections $P=(H-\mu)R$, $\widehat P=(\widehat H-\widehat\mu)R$ and the
centered empirical kernel $K=\frac{1}{d}(H-\mu)(H-\mu)^{\!\top}$.
The statistics loss on one layer is
\begin{equation}
D \;=\;
\underbrace{\frac{\norm{\widehat\mu-\mu}_2^2}{\norm{\mu}_2^2+\epsilon}}_{D_\mu}
+\underbrace{\frac{\norm{\widehat C-C}_F^2}{\norm{C}_F^2+\epsilon}}_{D_C}
+\underbrace{\frac{\norm{\widehat K-K}_F^2}{\norm{K}_F^2+\epsilon}}_{D_K},
\qquad C=\tfrac{1}{n-1}P^{\!\top}P,
\label{eq:stats}
\end{equation}
and $L_{\mathrm{GE}}$ averages $D$ over a layer set $S$ (every $L/4$ blocks
plus the final block). The relative normalization makes deep and shallow
layers commensurable and keeps the loss dimensionless.
The same quantity evaluated on held-out data, denoted
$E_{\mathrm{GE}}$, serves as a \emph{statistical-equivalence metric}:
$E_{\mathrm{GE}}\!\approx\!0$ certifies that first- and second-order feature
statistics and token-relation structure of the teacher survive quantization.
We emphasize a scope statement: \eqref{eq:stats} is a finite-sample,
functional criterion, not a claim of mathematical equivalence of the two
networks.
For completeness we also report centered linear CKA~\cite{cka} in the code
base; note that independent $d$-dimensional representations exhibit a positive
CKA bias of order $r/n$, which our unit tests reproduce
(Appendix~\ref{app:cka}).

\subsection{Latent binary codes as mirror descent}
\label{sec:mirror}
Each binary group is parameterized by latent variables $z$ and a log-scale
$r=\log s$:
\begin{equation}
\widehat{W}=s\,u,\qquad u=\tanh(\beta z),
\label{eq:latent}
\end{equation}
with initialization chosen to reproduce the teacher exactly at $\beta=\beta_0$:
\begin{equation}
s_0=\mathrm{mean}\,|W_{\mathrm{group}}|,\qquad
z_0=\frac{1}{\beta_0}\,\mathrm{artanh}\!\big(\mathrm{clip}(W/s_0,\pm\rho)\big),
\quad \rho=0.95 .
\label{eq:init}
\end{equation}
\paragraph{Mirror-descent interpretation.}
Let $\psi_\beta(u)=\frac{1}{\beta}\left[\frac{1+u}{2}\log\frac{1+u}{2}
+\frac{1-u}{2}\log\frac{1-u}{2}\right]$ be the (scaled) binary-entropy mirror
map on $(-1,1)$. Then $\nabla\psi_\beta(u)=\mathrm{artanh}(u)/\beta$ and its
convex conjugate satisfies $\nabla\psi_\beta^*(z)=\tanh(\beta z)$
(Appendix~\ref{app:mirror}).
Gradient descent on $z$ with the reparameterization \eqref{eq:latent} is
therefore exactly mirror descent on the primal $u\in(-1,1)$: the dual variable
accumulates gradients, and the mirror map squashes the iterate back into the
box. Increasing $\beta$ sharpens the map so that the primal iterates approach
the binary vertices $\{\pm1\}$; we anneal $\beta:1\to16$ exponentially over
training. In the final $20\%$ of steps the forward pass uses
$\mathrm{sign}(z)$ (hard-forward) while gradients flow through the $\tanh$
surrogate, and the final codes are $B=\mathrm{sign}(z)$.
Two diagnostics govern sign freezing: the flip rate $R_{\mathrm{flip}}$
(fraction of sign changes per interval) and the softness
$R_{\mathrm{soft}}=\E|u-\mathrm{sign}(z)|$; measured terminal values in our
runs were $R_{\mathrm{flip}}\!\approx\!3\times10^{-4}$ and
$R_{\mathrm{soft}}\!\approx\!0.028$.
The $\ell_\infty$ box view is equivalent: $su$ lives in $[-s,+s]$ per weight,
the vertex-inducing pressure comes from the sharpened map, and a positive
lower bound on $s$ (enforced by the log parameterization) prevents scale
collapse.

\subsection{Total objective and ablation switches}
The training objective is
\begin{equation}
J=\lambda_E L_E+\lambda_{\mathrm{GE}} L_{\mathrm{GE}}+\lambda_D L_{\mathrm{KD}},
\label{eq:total}
\end{equation}
where $L_{\mathrm{KD}}$ is a temperature-$\tau$ KL on the teacher's top-$k$
logits ($k{=}256$, $\tau{=}2$), used only in distillation conditions.
Setting subsets of coefficients to zero yields the ablation conditions of
Section~\ref{sec:exp}; e.g., error compensation alone is
$\lambda_{\mathrm{GE}}=\lambda_D=0$.

\section{Experiments}
\label{sec:exp}
\subsection{Setup}
\textbf{Model and quantization scope.} Qwen2.5-1.5B~\cite{qwen}; all $196$
linear matrices of the transformer blocks ($\approx1.31\times10^{9}$
parameters) are quantized; embedding and LM head remain FP16.
Group size $G{=}128$ (1.125 bits/weight; quantized-part capacity
$184.2$\,MB, $7.0\%$ of FP16).
\textbf{Data.} Calibration: WikiText-2~\cite{wikitext} train, $0.13$M tokens.
Training: $1{,}000$ steps $\times$ batch $8$ $\times$ length $512$
($4.1$M tokens). Evaluation: WikiText-2 test ($33$K tokens); OOD evaluation
uses C4~\cite{c4} (validation) and CNN/DailyMail (articles).
\textbf{Optimization.} AdamW; learning rates $5\times10^{-4}$ ($z$) and
$5\times10^{-3}$ ($\log s$); $\beta:1\to16$; hard-forward for the last 20\%.
INT4 runs use $300$ steps with learning rates $2\times10^{-5}$ (latent
weights) and $5\times10^{-4}$ ($\log s$), reflecting the raw scale of INT4
latents.
\textbf{Protocol.} Seeds $\{1234,2025,7\}$ were fixed \emph{before} any
result was observed and never changed; every completed run is reported.
Results are mean $\pm$ standard deviation over seeds. Teacher perplexity:
$11.951$ (WikiText-2), $16.18$ (C4), $14.70$ (CNN/DailyMail).
One binary condition-seed takes $\approx9$ minutes on a single A100.

\textbf{Conditions.}
(A)~RTN: $B=\mathrm{sign}(W)$, $s=\mathrm{mean}|W|$, no training;
(B)~layer-local output reconstruction (each layer independently);
(G)~logit distillation only;
(C)~statistics matching only ($\lambda_E{=}\lambda_D{=}0$);
(D)~accumulated-error loss only ($\lambda_{\mathrm{GE}}{=}\lambda_D{=}0$);
(E)~combination of (C)+(D) with $\lambda_{\mathrm{GE}}{=}0.1$.

\subsection{Binary ablation (1.125 bits/weight)}
\begin{table}[t]
\centering
\caption{Binary quantization of Qwen2.5-1.5B (3 seeds, mean$\pm$sd).
PPL ratio is validation perplexity divided by the FP16 teacher's ($11.951$).
$E_{\mathrm{GE}}$ is the statistics discrepancy \eqref{eq:stats} on held-out
data (lower is better).}
\label{tab:binary}
\begin{tabular}{lcc}
\toprule
Condition & PPL ratio & $E_{\mathrm{GE}}$ \\
\midrule
(A) RTN (no training)            & $\sim2.2\times10^{9}$ & --- \\
(B) layer-local reconstruction   & $\sim1.4\times10^{3}$ & --- \\
(G) logit distillation only      & $14.09\pm0.53$ & $1.56\pm0.07$ \\
(C) statistics matching only     & $262\pm47$     & $\mathbf{0.29\pm0.09}$ \\
(D) accumulated error only       & $\mathbf{9.56\pm0.15}$ & $1.41\pm0.12$ \\
(E) combination (C)+(D)          & $11.50\pm0.20$ & $0.48\pm0.24$ \\
\bottomrule
\end{tabular}
\end{table}
Table~\ref{tab:binary} isolates each mechanism.
Three observations.
\textbf{(i) Joint error compensation is the quality engine.}
(D) beats layer-local reconstruction (B) by more than two orders of magnitude
and logit distillation (G) by $32\%$ relative; the (D)--(G) gap exceeds
$8\sigma$ and the ordering is preserved in every seed.
Since (B) is precisely the ``minimize each $q_l$ in isolation'' strategy, the
(B) vs.\ (D) gap is a direct measurement of the value of the cancellation
structure in \eqref{eq:recursion}.
\textbf{(ii) Statistics matching is not a quality driver.}
(C) alone collapses perplexity (ratio $262$) yet achieves the best
$E_{\mathrm{GE}}$ of all conditions---the two metrics dissociate.
\textbf{(iii) The combination inherits both.}
(E) sits near (D) in quality and near (C) in statistical equivalence;
neither single mechanism achieves this pair. This is a functional
interaction rather than an additive aggregation: the direction that preserves
statistics is, by (C), catastrophic for quality on its own, and only the joint
optimization finds solutions good under both criteria.

\subsection{Bit-independence: INT4 (4.125 bits/weight)}
\begin{table}[t]
\centering
\caption{INT4 group quantization, same teacher and protocol
(3 seeds, mean$\pm$sd; PPL ratio vs.\ teacher).}
\label{tab:int4}
\begin{tabular}{lc}
\toprule
Condition & PPL ratio \\
\midrule
(a) nearest rounding      & $1.201\pm0.000$ \\
(b) layer-local           & $1.088\pm0.003$ \\
(c) statistics only       & $1.176\pm0.006$ \\
(d) accumulated error only& $\mathbf{1.060\pm0.002}$ \\
(e) combination           & $1.060\pm0.003$ \\
\bottomrule
\end{tabular}
\end{table}
Table~\ref{tab:int4} repeats the ablation with
$\mathcal{D}=\{-8,\dots,7\}$: (d) and (e) beat layer-local (b) by
$\approx9\sigma$. The identical objective, with no bit-specific modification,
improves both the 1-bit and 4-bit regimes, supporting the claim that the
compensation mechanism is a property of the \emph{objective}, not of the
value set.

\subsection{Out-of-domain evaluation}
\begin{table}[t]
\centering
\caption{OOD evaluation (calibration: WikiText-2 only). Left: PPL ratio per
evaluation set. Right: $E_{\mathrm{GE}}$ per evaluation set.
3 seeds, mean$\pm$sd.}
\label{tab:ood}
\small
\begin{tabular}{lccc|ccc}
\toprule
& \multicolumn{3}{c|}{PPL ratio} & \multicolumn{3}{c}{$E_{\mathrm{GE}}$}\\
Condition & WT-2 & C4 & CNN/DM & WT-2 & C4 & CNN/DM \\
\midrule
(G) distill.\ only & $13.91\pm0.21$ & $73.8\pm3.5$ & $54.1\pm2.0$ & $1.56\pm0.07$ & $1.70\pm0.09$ & $2.99\pm2.05$ \\
(C) stats only     & $313\pm114$    & $780\pm306$  & $484\pm192$  & $0.29\pm0.07$ & $0.62\pm0.28$ & $0.42\pm0.24$ \\
(D) accum.\ error  & $\mathbf{9.62\pm0.13}$ & $\mathbf{46.1\pm1.4}$ & $\mathbf{33.4\pm0.8}$ & $1.41\pm0.12$ & $1.46\pm0.16$ & $1.52\pm0.12$ \\
(E) combination    & $11.57\pm0.14$ & $62.9\pm1.8$ & $44.9\pm1.1$ & $\mathbf{0.26\pm0.03}$ & $\mathbf{0.53\pm0.08}$ & $\mathbf{0.88\pm0.15}$ \\
\bottomrule
\end{tabular}
\end{table}
Table~\ref{tab:ood} evaluates the same checkpoints on domains never seen
during calibration.
\textbf{Error compensation's advantage grows off-domain}: the (D)/(G)
quality gap widens from $1.45\times$ in-domain to $1.60\times$ on C4 and
$1.62\times$ on CNN/DailyMail; the mechanism is not overfitting the
calibration domain.
\textbf{Statistics matching preserves equivalence off-domain}: conditions with
$L_{\mathrm{GE}}$ hold $E_{\mathrm{GE}}$ at $0.42$--$0.88$ off-domain while
conditions without it sit at $1.41$--$2.99$.
\textbf{An honest negative}: the relative OOD degradation factor
(OOD ratio $\div$ in-domain ratio) of (E) is not better than (D)'s
($5.43$ vs.\ $4.79$ on C4); adding statistics matching does not buy extra
\emph{perplexity} robustness. Its measurable benefit off-domain is the
preservation of feature statistics, i.e., of the teacher-equivalence
certificate, at a $\sim20\%$ quality cost. Applications that require
downstream-representation stability (feature reuse, probing, cascaded
systems) are the natural consumers of this trade.

\subsection{Equation-level verification}
\label{sec:verify}
All core equations are unit-tested in NumPy independently of the training
code: (1) the initialization \eqref{eq:init} reproduces
$\tanh(\beta_0 z_0)=\mathrm{clip}(W/s_0,\pm\rho)$ to $10^{-12}$ and preserves
teacher signs; (2) $\beta{=}64$ drives $|u-\mathrm{sign}(z)|$ below $0.02$;
(3) centered linear CKA equals $1$ under orthogonal transforms and exhibits
the $r/n$ bias on independent representations; (4) the statistics loss
\eqref{eq:stats} is zero iff distributions match and responds to mean and
variance shifts; (5) the recursion of Lemma~\ref{lem:exact} equals the
forward difference exactly on random linear networks ($10^{-10}$).

\section{Limitations}
\label{sec:limit}
We state the boundaries of the evidence plainly.
(1)~\textbf{Scale}: all measurements are at 1.5B parameters; behavior at
8B+ is extrapolation, and the literature suggests larger models quantize
more gracefully, which could shrink (or grow) the reported gaps.
(2)~\textbf{Scope of binarization}: embedding and LM head remain FP16;
fully binarized configurations (including tied embedding/output projections)
are designed in the framework but not measured here.
(3)~\textbf{Baselines}: condition (B) is our own layer-local reconstruction
implementation, a controlled proxy for the GPTQ family rather than a
head-to-head comparison against tuned public implementations; QEP-style
sequential propagation is not re-implemented.
(4)~\textbf{Metrics}: we report perplexity and $E_{\mathrm{GE}}$ only; no
downstream task suite.
(5)~\textbf{Statistics}: three seeds suffice for the $>8\sigma$ headline gaps
but not for fine distinctions (e.g., INT4 (d) vs.\ (e)).
(6)~\textbf{Compute}: the exact accumulated-error loss costs one extra
teacher-parallel forward per step; layer-recursive implementations with
explicit operator application cost $\approx3\times$ condition (D).

\section{Conclusion}
Treating extreme low-bit quantization as a joint optimization over all
layers' discrete variables---with the accumulated error as an explicit,
provably exact recursive state, and finite-sample statistics matching as an
equivalence-preserving regularizer---yields large, seed-stable improvements
over layer-local and distillation-only training at 1.125 bits, transfers
unchanged to 4.125 bits, and degrades gracefully out of domain.
The two mechanisms play complementary roles that neither plays alone:
error compensation owns output quality; statistics matching owns the
preservation of the teacher's representational statistics.
We believe the exactness lemma, which collapses the operator formalism into
two forward passes, makes the approach practical at larger scales, and we
plan fully binarized (embedding and LM-head inclusive) experiments as the
next step.

\subsection*{Reproducibility}
Pre-registered seeds $\{1234,2025,7\}$; all completed runs reported; exact
hyperparameters in Section~\ref{sec:exp}; equation-level unit tests in
Section~\ref{sec:verify}. Notebooks reproducing
Tables~\ref{tab:binary}--\ref{tab:ood} are available from the author.

\appendix
\section{Derivation of the mirror map}
\label{app:mirror}
Let $\psi_\beta(u)=\frac{1}{\beta}\big[\frac{1+u}{2}\log\frac{1+u}{2}
+\frac{1-u}{2}\log\frac{1-u}{2}\big]$ on $u\in(-1,1)$.
Differentiating,
\[
\nabla\psi_\beta(u)
=\frac{1}{2\beta}\log\frac{1+u}{1-u}
=\frac{1}{\beta}\,\mathrm{artanh}(u),
\]
so the dual (mirror) variable is $z=\mathrm{artanh}(u)/\beta$ and the inverse
map is $u=\nabla\psi_\beta^{*}(z)=\tanh(\beta z)$.
A gradient step $z\leftarrow z-\eta\,\partial J/\partial z$ followed by the
inverse map is mirror descent on $u$ with distance-generating function
$\psi_\beta$. As $\beta\to\infty$ the curvature of $\psi_\beta$ at the
boundary sharpens, and minimizers of linearized objectives over $[-1,1]$
concentrate on the vertices $\{\pm1\}$; annealing $\beta$ therefore
interpolates from soft averaging to vertex selection.
The initialization \eqref{eq:init} follows by inverting
$u_0=\mathrm{clip}(W/s_0,\pm\rho)$, the clip guarding
$\mathrm{artanh}$'s domain; signs of $z_0$ equal signs of $W$, so
$B_0=\mathrm{sign}(W)$ and RTN is recovered at initialization.

\section{First-order form of the recursion}
\label{app:firstorder}
With the teacher-side operator $A_l=\partial F_l/\partial h|_{h_l}$,
\[
e_{l+1}=\Fhat_l(h_l+e_l)-F_l(h_l)
=\underbrace{\Fhat_l(h_l+e_l)-\Fhat_l(h_l)}_{\widehat{A}_l(e_l)}
+\;q_l
=A_l e_l+q_l+O(\norm{e_l}^2)+\big(\widehat{A}_l-A_l\big)(e_l),
\]
where $(\widehat A_l-A_l)(e_l)$ is second order in the perturbations (product
of the weight perturbation and $e_l$). Lemma~\ref{lem:exact} shows the
quantized-side instantiation absorbs all higher-order terms exactly.

\section{CKA bias on independent representations}
\label{app:cka}
For centered linear CKA between independent random $n\times d$
representations with $d\ll n$, the expected value concentrates near $d/n$
rather than $0$ (rank-driven positive bias). Our unit tests use
$n=400$, $d=16$ and verify $\mathrm{CKA}<5\,d/n$, plus exact invariance
under orthogonal transforms. Reported CKA values should therefore be read
against this floor.


\begin{thebibliography}{99}\small
\bibitem{gptq} E. Frantar, S. Ashkboos, T. Hoefler, D. Alistarh.
GPTQ: Accurate post-training quantization for generative pre-trained
transformers. arXiv:2210.17323, 2022.
\bibitem{brecq} Y. Li et al. BRECQ: Pushing the limit of post-training
quantization by block reconstruction. arXiv:2102.05426, 2021.
\bibitem{adaround} M. Nagel, R. A. Amjad, M. van Baalen, C. Louizos,
T. Blankevoort. Up or down? Adaptive rounding for post-training quantization.
arXiv:2004.10568, 2020.
\bibitem{qep} Y. Arai, Y. Ichikawa. Quantization error propagation:
Revisiting layer-wise post-training quantization. arXiv:2504.09629, 2025.
\bibitem{yaqa} A. Tseng, Z. Sun, C. De Sa. Model-preserving adaptive
rounding. arXiv:2505.22988, 2025.
\bibitem{turboboa} J. Kim et al. TurboBoA: Faster and exact attention-aware
quantization without backpropagation. arXiv:2602.04929, 2026.
\bibitem{rescomp} Rethinking residual errors in compensation-based LLM
quantization. arXiv:2604.07955, 2026.
\bibitem{binaryconnect} M. Courbariaux, Y. Bengio, J.-P. David.
BinaryConnect: Training deep neural networks with binary weights during
propagations. arXiv:1511.00363, 2015.
\bibitem{xnor} M. Rastegari, V. Ordonez, J. Redmon, A. Farhadi. XNOR-Net:
ImageNet classification using binary convolutional neural networks.
arXiv:1603.05279, 2016.
\bibitem{bitnet} H. Wang et al. BitNet: Scaling 1-bit transformers for large
language models. arXiv:2310.11453, 2023.
\bibitem{bitnet158} S. Ma et al. The era of 1-bit LLMs: All large language
models are in 1.58 bits. arXiv:2402.17764, 2024.
\bibitem{onebit} Y. Xu et al. OneBit: Towards extremely low-bit large
language models. arXiv:2402.11295, 2024.
\bibitem{billm} W. Huang et al. BiLLM: Pushing the limit of post-training
quantization for LLMs. arXiv:2402.04291, 2024.
\bibitem{fbillm} L. Ma et al. FBI-LLM: Scaling up fully binarized LLMs from
scratch via autoregressive distillation. arXiv:2407.07093, 2024.
\bibitem{proxquant} Y. Bai, Y.-X. Wang, E. Liberty. ProxQuant: Quantized
neural networks via proximal operators. arXiv:1810.00861, 2018.
\bibitem{md} T. Ajanthan, K. Gupta, P. H. S. Torr, R. Hartley, P. K. Dokania.
Mirror descent view for neural network quantization. arXiv:1910.08237, 2019.
\bibitem{quipsharp} A. Tseng, J. Chee, Q. Sun, V. Kuleshov, C. De Sa.
QuIP\#: Even better LLM quantization with Hadamard incoherence and lattice
codebooks. arXiv:2402.04396, 2024.
\bibitem{spinquant} Z. Liu et al. SpinQuant: LLM quantization with learned
rotations. arXiv:2405.16406, 2024.
\bibitem{faqd} Z. Li et al. Feature affinity assisted knowledge distillation
and quantization of deep neural networks on label-free data.
arXiv:2302.10899, 2023.
\bibitem{zeroq} Y. Cai, Z. Yao, Z. Dong, A. Gholami, M. W. Mahoney,
K. Keutzer. ZeroQ: A novel zero shot quantization framework.
arXiv:2001.00281, 2020.
\bibitem{kd} G. Hinton, O. Vinyals, J. Dean. Distilling the knowledge in a
neural network. arXiv:1503.02531, 2015.
\bibitem{cka} S. Kornblith, M. Norouzi, H. Lee, G. Hinton. Similarity of
neural network representations revisited. ICML, 2019.
\bibitem{qwen} Qwen Team. Qwen2.5 technical report. arXiv:2412.15115, 2024.
\bibitem{wikitext} S. Merity, C. Xiong, J. Bradbury, R. Socher. Pointer
sentinel mixture models. arXiv:1609.07843, 2016.
\bibitem{c4} C. Raffel et al. Exploring the limits of transfer learning with
a unified text-to-text transformer. JMLR, 2020.
\end{thebibliography}
\end{document}